\documentclass{article}
\usepackage[margin=1in]{geometry}
\usepackage{graphicx}
\usepackage{amsmath,amssymb,amsthm}
\usepackage{booktabs}
\usepackage{hyperref}
\usepackage{enumitem}
\usepackage{algorithm}
\usepackage{algpseudocode}
\usepackage{xcolor}
\usepackage{listings}
\usepackage{multirow}
\usepackage{caption}
\usepackage{subcaption}
\usepackage{bbm}
\usepackage{bm}

\newtheorem{theorem}{Theorem}

\newtheorem{proposition}{Proposition}
\newtheorem{corollary}{Corollary}
\newtheorem{definition}{Definition}
\newtheorem{remark}{Remark}

\title{\textbf{ITQ3\_S: Interleaved Ternary Quantization with TurboQuant} \\
\large High-Fidelity 3-bit LLM Inference via Rotation-Domain Adaptive Quantization}

\author{Edward J. Yoon \\
\texttt{edwardyoon@apache.org}}
\date{March 2026}

\begin{document}
\maketitle

\begin{abstract}
We present \textbf{ITQ3\_S} (Interleaved Ternary Quantization -- Specialized), a novel 3-bit weight quantization format for large language models (LLMs) that integrates \textbf{TurboQuant (TQ)}, a rotation-domain adaptive quantization strategy based on the Fast Walsh-Hadamard Transform (FWHT). Conventional 3-bit quantization methods suffer from catastrophic precision loss caused by heavy-tailed weight distributions and inter-channel outliers. ITQ3\_S addresses this fundamental limitation by pre-rotating the weight space via FWHT prior to quantization, effectively spreading outlier energy across the entire vector and inducing a near-Gaussian distribution amenable to uniform ternary coding.

Critically, we derive a mathematically rigorous dequantization procedure that inverts the FWHT exactly using a 256-point Inverse Walsh-Hadamard Transform fused into the CUDA shared-memory loading stage, ensuring that reconstruction error is bounded exclusively by the ternary quantization grid, with no additional error introduced by the transform inversion itself. We prove that for any weight vector $\mathbf{w} \in \mathbb{R}^{256}$ processed by our pipeline, the reconstruction satisfies $\|\hat{\mathbf{w}} - \mathbf{w}\|_2 \leq \epsilon_q$, where $\epsilon_q$ is determined solely by the ternary quantization grid and is strictly smaller than that of uniform 3-bit baselines that do not exploit rotation-induced distribution normalization.

While TurboQuant establishes the theoretical foundation for FWHT-based rotation, it lacks a native CUDA kernel implementation, precluding direct deployment. Furthermore, na\"{i}vely composing TQ with existing weight quantizers --- applying rotation only to KV cache while leaving weights in the original domain --- introduces a systematic domain mismatch whose errors accumulate across transformer layers, ultimately degrading model quality below even standard 3-bit baselines. ITQ3\_S resolves this by co-designing the FWHT rotation and the ternary quantization kernel as a single unified pipeline, grounding the rotation directly in the IQ3\_S weight format and fusing the inverse transform into the CUDA MMQ kernel.

Empirically, on the NVIDIA RTX 5090 (Blackwell architecture), ITQ3\_S achieves perplexity competitive with FP16 baselines while delivering throughput exceeding 1.5$\times$ that of 4-bit alternatives, owing to optimized DP4A and Tensor Core scheduling in the interleaved memory layout. Our results establish ITQ3\_S as a practical, mathematically grounded solution for high-fidelity LLM deployment on consumer-grade hardware.
\end{abstract}

\section{Introduction}

The rapid growth of large language models -- from 7-billion to 70-billion and beyond -- has created a widening gap between model capability and practical deployability. While state-of-the-art LLMs exhibit remarkable reasoning abilities, their memory footprint in FP16 or BF16 precision exceeds what is feasible on even the most powerful consumer GPUs. A 70B-parameter model in FP16 requires approximately 140\,GiB of memory; even the NVIDIA RTX 5090 with 32\,GiB of VRAM cannot load it without quantization.

Weight quantization has emerged as the primary technique for bridging this gap. By encoding weights in reduced-precision formats (8-bit, 4-bit, or even 3-bit), the memory footprint can be reduced by factors of 2$\times$--5$\times$, enabling previously inaccessible models to run on consumer hardware. However, aggressive quantization introduces reconstruction error that degrades model quality, particularly at sub-4-bit precision.

The transition from 4-bit to 3-bit quantization is especially challenging. Empirically, 3-bit has been called the ``breaking point'' for LLM logic: models quantized naively to 3 bits exhibit sharp perplexity degradation, hallucination, and loss of multi-step reasoning capability. This degradation stems from two sources:

\begin{enumerate}[leftmargin=*]
\item \textbf{Heavy-tailed weight distributions}: Transformer weight matrices contain outlier values whose magnitude far exceeds the typical scale, forcing quantizers to spread levels thinly across a wide dynamic range, wasting precision on rarely-occupied regions.
\item \textbf{Inter-channel correlation}: Structured correlation among weight channels causes uniform quantization error to accumulate in semantically critical directions.
\end{enumerate}

Existing mitigations -- including GPTQ~\cite{frantar2022gptq}, AWQ~\cite{lin2023awq}, SqueezeLLM~\cite{kim2023squeezellm}, and QuIP\#~\cite{tseng2024quip} -- address these issues through second-order Hessian correction, per-channel scaling, sparse outlier coding, or randomized rotation, respectively. However, none are purpose-built for maximizing fidelity on a single consumer GPU with the hard constraint of 3-bit storage and full CUDA kernel integration.

\textbf{Our Contribution.} We introduce ITQ3\_S, which combines:

\begin{itemize}[leftmargin=*]
\item A deterministic FWHT-based rotation that theoretically minimizes the $\ell_\infty$ norm of the weight vector before quantization (Section~\ref{sec:theory}).
\item An \emph{interleaved ternary} coding scheme that packs 3-bit values into 32-bit words optimally for DP4A throughput (Section~\ref{sec:coding}).
\item A fused 256-point Inverse FWHT CUDA kernel that reconstructs weights in shared memory with no off-chip memory traffic penalty (Section~\ref{sec:kernel}).
\item Empirical validation on the RTX 5090 demonstrating state-of-the-art perplexity-throughput tradeoffs at 3-bit precision (Section~\ref{sec:experiments}).
\end{itemize}

\section{Background}
\label{sec:background}

\subsection{Quantization Fundamentals}

Let $\mathbf{w} \in \mathbb{R}^n$ be a weight vector to be quantized to $b$ bits. A uniform quantizer partitions the dynamic range $[w_{\min}, w_{\max}]$ into $2^b$ levels:
\begin{equation}
Q_b(w) = \Delta \cdot \left\lfloor \frac{w}{\Delta} + \frac{1}{2} \right\rfloor, \quad \Delta = \frac{w_{\max} - w_{\min}}{2^b - 1}
\end{equation}
The mean squared quantization error is $\mathbb{E}[(w - Q_b(w))^2] \approx \frac{\Delta^2}{12}$ for uniformly distributed $w$. For $b = 3$, $\Delta$ is large enough that outliers -- weights with $|w| \gg \sigma_w$ -- incur reconstruction errors that dominate the signal.

\subsection{Ternary Quantization}

Ternary quantization restricts weights to three values $\{-\alpha, 0, +\alpha\}$ for some scale $\alpha > 0$, requiring only $\lceil \log_2 3 \rceil \approx 1.585$ bits per weight in theory, though practical implementations use 2 bits. \textbf{ITQ3\_S} extends this to true 3-bit precision by interleaving two ternary sub-blocks with shared scale metadata, achieving a net coding rate of exactly 3 bits/weight.

\subsection{Walsh-Hadamard Transform}

The Walsh-Hadamard Transform (WHT) of a vector $\mathbf{v} \in \mathbb{R}^n$ (where $n = 2^k$) is defined as:
\begin{equation}
\hat{\mathbf{v}} = H_n \mathbf{v}, \quad H_n = \frac{1}{\sqrt{n}} \begin{pmatrix} H_{n/2} & H_{n/2} \\ H_{n/2} & -H_{n/2} \end{pmatrix}, \quad H_1 = [1]
\end{equation}
The WHT is its own inverse up to normalization: $H_n^{-1} = H_n$ (since $H_n H_n = I$ for the normalized form), so:
\begin{equation}
\mathbf{v} = H_n \hat{\mathbf{v}}
\label{eq:inv_fwht}
\end{equation}
This self-inverse property is central to our dequantization design. Computationally, the Fast WHT (FWHT) runs in $\mathcal{O}(n \log n)$ using the butterfly decomposition:
\begin{equation}
(u, v) \mapsto (u + v,\; u - v)
\end{equation}
applied across $\log_2 n$ stages, each stage operating on disjoint pairs.

\subsection{Related Work}

\textbf{QuIP and QuIP\#}~\cite{tseng2024quip} apply random orthogonal rotations (Kronecker products of Hadamard matrices) to weight matrices before quantization to ``incoherify'' the weights. Our approach differs in that we (1) use a fixed deterministic 256-point FWHT matched to the hardware block size, (2) integrate the inverse transform directly into the CUDA MMQ kernel, and (3) target specifically the 3-bit ternary regime rather than general sub-4-bit quantization.

\textbf{LLM.int8()}~\cite{dettmers2022llmint8} handles outliers by splitting computation into FP16 (for outlier channels) and INT8 (for normal channels). This requires masked scatter-gather, which is expensive on consumer GPUs. ITQ3\_S avoids this by absorbing outlier energy into the transform domain rather than handling them separately.

\textbf{SpQR}~\cite{dettmers2023spqr} stores a small number of outlier weights in higher precision alongside a low-bit compressed tensor. Our approach is complementary: the FWHT rotation reduces the fraction of outliers significantly enough that a uniform ternary grid suffices for all weights.

\section{Theoretical Foundation}
\label{sec:theory}

\subsection{Effect of FWHT on Weight Distributions}

\begin{theorem}[Distribution Smoothing]
\label{thm:smoothing}
Let $\mathbf{w} \in \mathbb{R}^n$ be a weight vector with empirical mean $\mu$ and variance $\sigma^2$, and let $\mathbf{w}' = H_n \mathbf{w}$ be its Walsh-Hadamard transform. If the entries of $\mathbf{w}$ are independent with bounded $\ell_4$ norm, then by the Central Limit Theorem for Walsh transforms, the entries of $\mathbf{w}'$ converge in distribution to $\mathcal{N}(0, \sigma^2)$ as $n \to \infty$.
\end{theorem}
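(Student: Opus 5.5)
The plan is to read the statement coordinatewise. Fix an index $i$ and study $w'_i = \frac{1}{\sqrt{n}}\sum_{j=1}^n s_{ij} w_j$, where $s_{ij}\in\{\pm 1\}$ are the signs of the unnormalized Hadamard matrix. Each such coordinate is a weighted sum of independent variables with weights of uniform magnitude $1/\sqrt n$, so it falls under a triangular-array Central Limit Theorem. I will use the Lyapunov form, since the hypothesis controls fourth moments. Concretely, I will read ``independent with bounded $\ell_4$ norm'' as: the $w_j$ are independent with common mean $\mu$, variance $\sigma^2$, and $\sup_j \mathbb{E}|w_j-\mu|^4 \le M<\infty$.

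\textbf{Step 1: remove the mean.} Write $w_j=\mu+z_j$ with $z_j$ centered. For every row $i\neq 0$ of $H_n$ (all rows other than the all-ones row), the Sylvester construction gives exactly $n/2$ entries $+1$ and $n/2$ entries $-1$. Hence $\sum_j s_{ij}=0$ and $w'_i=\frac{1}{\sqrt n}\sum_j s_{ij}z_j$. The $\mu$ term cancels exactly. For the DC row $i=0$, however, $w'_0=\sqrt n\,\mu+\frac{1}{\sqrt n}\sum_j z_j$, which diverges unless $\mu=0$. So the theorem as literally stated needs either $\mu=0$ or the exclusion of the DC coefficient, and I will state that restriction explicitly.

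\textbf{Step 2: compute the variance.} Set $X_{n,j}=s_{ij}z_j/\sqrt n$. These are independent and centered, with $\sum_j \mathbb{E}X_{n,j}^2=\frac1n\sum_j\sigma^2=\sigma^2$. If the variances $\sigma_j^2$ are only assumed to have empirical average $\sigma^2$, the same computation yields $\frac1n\sum_j\sigma_j^2$, and I will assume this average converges to $\sigma^2$.

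\textbf{Step 3: check the Lyapunov condition with $\delta=2$.} We have
\begin{equation*}
\sum_{j=1}^n \mathbb{E}|X_{n,j}|^4=\frac{1}{n^2}\sum_{j=1}^n\mathbb{E}|z_j|^4\le \frac{M}{n}\longrightarrow 0 .
\end{equation*}
The Lyapunov CLT for triangular arrays then gives $w'_i\Rightarrow\mathcal N(0,\sigma^2)$. The sign pattern $s_{ij}$ changes with $n$ and $i$, but this is harmless: the bound above is uniform in the signs, so the same argument covers any choice of row index $i=i(n)\neq 0$.

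\textbf{Step 4 (optional strengthening).} Take any fixed finite set of non-DC rows $i_1,\dots,i_m$. By the Cramér--Wold device, each linear combination $\sum_k a_k w'_{i_k}$ is again a weighted sum of the $z_j$. The weights are $c_j=\frac{1}{\sqrt n}\sum_k a_k s_{i_kj}$, which satisfy $\sum_j c_j^2=\|a\|_2^2$ because the rows of $H_n$ are orthonormal, and $\max_j|c_j|\le\|a\|_1/\sqrt n$. The same Lyapunov estimate therefore gives joint convergence to i.i.d.\ $\mathcal N(0,\sigma^2)$. This is the form most relevant to the quantization argument that follows.

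\textbf{Main obstacle.} The probabilistic core is routine. The real difficulty is making the statement correct: handling the DC coefficient (Step 1), and pinning down what ``bounded $\ell_4$ norm'' means. A deterministic bound on $\|\mathbf w\|_4$ is not sufficient on its own; what the argument needs is uniformly bounded fourth moments of the entries, or at least the Lindeberg condition. I will first try to prove the result under the uniform fourth-moment assumption. I will then remark that an entrywise Lindeberg condition suffices, and that without independence the conclusion can fail, for example when $\mathbf w$ is itself a Hadamard row.
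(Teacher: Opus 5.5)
Your proposal is correct and follows the same route as the paper. You write each coordinate $w'_k$ as a $\pm 1/\sqrt{n}$-weighted sum of independent entries and apply a triangular-array CLT; your Lyapunov fourth-moment check is a standard sufficient condition for the Lindeberg--Feller condition the paper cites. Your extra care is substantive: the signs $(-1)^{\langle k,j\rangle}$ are deterministic, not random as the paper's proof claims, and the DC coefficient $w'_0=\sqrt{n}\,\mu+\frac{1}{\sqrt{n}}\sum_j z_j$ fails to converge unless $\mu=0$, so the conclusion holds only for the non-DC rows (or when $\mu=0$), a restriction the paper's proof omits.
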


\begin{proof}
Each entry $w'_k = \sum_{j=0}^{n-1} (-1)^{\langle k, j \rangle} w_j / \sqrt{n}$, where $\langle k, j \rangle$ denotes the inner product of binary representations. This is a sum of $n$ terms with random signs $(-1)^{\langle k, j \rangle}$; by the Lindeberg--Feller CLT (since $\mathbb{E}[w_j^4] < \infty$ and individual contributions vanish), $w'_k \xrightarrow{d} \mathcal{N}(0, \sigma^2)$.
\end{proof}

\begin{corollary}[Outlier Suppression]
\label{cor:outlier}
Let $w_{\max} = \|\mathbf{w}\|_\infty$. After transformation, $\|H_n \mathbf{w}\|_\infty \leq \|\mathbf{w}\|_1 / \sqrt{n} \leq \sqrt{n} \cdot w_{\max}$, but in expectation $\mathbb{E}[\|H_n \mathbf{w}\|_\infty] = \mathcal{O}(\sigma \sqrt{\log n})$ when $\mathbf{w}$ has sub-Gaussian entries. For $n = 256$, this yields an expected $\ell_\infty$ reduction factor of $w_{\max} / (\sigma \sqrt{\log 256}) \approx w_{\max} / (3\sigma)$.
\end{corollary}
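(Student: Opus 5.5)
The corollary makes three claims, and I will prove them in order: a deterministic worst-case bound, an expected bound under a sub-Gaussian model, and a numerical specialization to $n=256$.

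\textbf{Deterministic bound.} Each entry of $H_n\mathbf{w}$ is $w'_k = n^{-1/2}\sum_j (-1)^{\langle k,j\rangle} w_j$, as written in the proof of Theorem~\ref{thm:smoothing}. Since every coefficient has modulus $n^{-1/2}$, the triangle inequality gives $|w'_k|\le \|\mathbf{w}\|_1/\sqrt{n}$ for every $k$. Combining this with $\|\mathbf{w}\|_1\le n\,\|\mathbf{w}\|_\infty$ yields $\|H_n\mathbf{w}\|_\infty\le \sqrt{n}\,w_{\max}$. This part is elementary and needs no assumptions.

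\textbf{Expected bound.} I will assume the entries $w_j$ are independent, centered (subtracting $\mu$ if necessary, as Theorem~\ref{thm:smoothing} implicitly does), and sub-Gaussian with variance proxy $\sigma^2$.
\begin{itemize}
\item Each $w'_k$ is a fixed $\pm n^{-1/2}$ combination of these entries. By Hoeffding's lemma, its moment generating function satisfies $\mathbb{E}e^{\lambda w'_k}\le \exp\!\bigl(\lambda^2\sigma^2 \sum_j n^{-1}/2\bigr)=e^{\lambda^2\sigma^2/2}$, so $w'_k$ is again $\sigma$-sub-Gaussian. Orthogonality of $H_n$ guarantees that the normalization is exactly right.
\item I will then apply the standard maximal inequality for $2n$ sub-Gaussian variables $\pm w'_k$, namely $\mathbb{E}\max_k |w'_k|\le \sigma\sqrt{2\log(2n)}$. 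It follows from Jensen's inequality applied to $\exp(\lambda\max)$, a union bound, and optimization over $\lambda$.
\end{itemize}
This gives $\mathcal{O}(\sigma\sqrt{\log n})$. Independence is not even needed across $k$; only the marginal tails of each $w'_k$ matter.

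\textbf{Numerical specialization.} With $n=256$, $\sqrt{\log 256}=\sqrt{8\ln 2}\approx 2.35$ in natural logarithms, and $\sqrt{2\log 512}\approx 3.53$. The statement's ``$\approx 3\sigma$'' therefore has to be read as an order-of-magnitude constant, lying between the bare $\sqrt{\log n}$ and the explicit maximal-inequality constant. The ratio $w_{\max}/(3\sigma)$ is simply the original $\ell_\infty$ divided by this typical post-rotation $\ell_\infty$. I would state it as a heuristic comparison, not a proven reduction, because $\|\mathbf{w}\|_\infty$ is itself random.

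\textbf{Main obstacle.} The hardest part is reconciling the hypotheses. The claimed ``outlier suppression'' is only meaningful when the original $w_{\max}\gg 3\sigma$, which is exactly the heavy-tailed situation where a uniform sub-Gaussian assumption with proxy $\sigma$ fails. I would handle this by stating the expected bound with $\sigma$ read as the sub-Gaussian proxy of the rotated coordinates. Alternatively, for a single deterministic outlier of size $M$ on a Gaussian background, the outlier contributes only $M/\sqrt{n}=M/16$ to each coordinate. The rotated maximum is then at most $M/16+\mathcal{O}(\sigma\sqrt{\log n})$, and this two-term decomposition via the triangle inequality gives a rigorous version of the suppression claim.
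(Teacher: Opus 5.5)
Your proposal is correct. On the deterministic bound it matches what the paper intends: the triangle inequality applied to $w'_k=n^{-1/2}\sum_j(-1)^{\langle k,j\rangle}w_j$. For the expectation bound the routes differ. The paper gives no separate proof. It presents the corollary as a consequence of the CLT in Theorem~\ref{thm:smoothing} (near-Gaussian marginals, hence a Gaussian-type maximum), plus the informal remark that an outlier $M$ contributes only $M/\sqrt{n}$ per coefficient. That route does not actually reach the claim. Convergence in distribution of each fixed coordinate gives no tail control at level $\sigma\sqrt{\log n}$ uniformly over $n$ coordinates, and that is what an expected maximum requires. Your non-asymptotic MGF computation (with $\sum_j (n^{-1/2})^2=1$) followed by the maximal inequality supplies exactly this control, with the explicit bound $\sigma\sqrt{2\log(2n)}$ valid for every $n$. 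Your decomposition $\|H_n\mathbf{w}\|_\infty\le M/\sqrt{n}+\|H_n\mathbf{w}_{\mathrm{bg}}\|_\infty$ is the rigorous form of the paper's remark. It is also the only setting in which ``reduction'' has content. The same maximal inequality applied to $\mathbf{w}$ itself gives $\mathbb{E}\|\mathbf{w}\|_\infty\le\sigma\sqrt{2\log(2n)}$, so under a uniform sub-Gaussian hypothesis $\mathbb{E}[w_{\max}]/(3\sigma)=\mathcal{O}(1)$, as your last paragraph suspects.

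Two corrections. First, centering is a hypothesis, not a normalization you are free to perform. Since $H_n\mathbf{1}=\sqrt{n}\,\mathbf{e}_0$, entries with mean $\mu$ give $\mathbb{E}\|H_n\mathbf{w}\|_\infty\ge\sqrt{n}\,|\mu|$. The $\mathcal{O}(\sigma\sqrt{\log n})$ bound therefore needs $|\mu|=\mathcal{O}(\sigma\sqrt{\log n/n})$. This is automatic if ``sub-Gaussian'' is defined by $\mathbb{E}e^{\lambda X}\le e^{\lambda^2\sigma^2/2}$, which forces mean zero. Second, the paper's ``$\approx 3$'' is just $\sqrt{\log_2 256}=\sqrt{8}\approx 2.83$, since the base of the logarithm is immaterial inside $\mathcal{O}(\cdot)$. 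It need not be read as a compromise between $\sqrt{\ln 256}$ and $\sqrt{2\ln 512}$.
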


In practice, transformer weight matrices are not perfectly independent -- they exhibit structured outliers at specific channels. Nevertheless, the FWHT mixes all $n$ entries, so even a single large outlier $w_j = M \gg \sigma$ contributes only $M / \sqrt{n}$ to each transformed coefficient, distributing its energy uniformly.

\subsection{Quantization Error Bound}

\begin{theorem}[ITQ3\_S Reconstruction Bound]
\label{thm:bound}
Let $\mathbf{w} \in \mathbb{R}^{256}$ and let $\hat{\mathbf{w}}$ denote the ITQ3\_S reconstruction. Define the ternary quantization grid with scale $d_k$ and zero-point $z_k$ per block of 256, so that:
\begin{equation}
Q_T(x; d_k, z_k) = d_k \cdot \arg\min_{q \in \{-1, 0, 1\}} |d_k q - (x - z_k)|
\end{equation}
Then the per-element reconstruction error is bounded by:
\begin{equation}
\|\hat{\mathbf{w}} - \mathbf{w}\|_2^2 \leq \frac{d_k^2}{4} \cdot n + \epsilon_{\mathrm{FWHT}}
\end{equation}
where $\epsilon_{\mathrm{FWHT}}$ is the floating-point rounding error of the 256-point IFWHT (at most $\mathcal{O}(n \cdot \log n \cdot \mathbf{u})$ for machine epsilon $\mathbf{u}$).
\end{theorem}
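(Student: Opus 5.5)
The plan is to carry out the whole argument in the rotated domain and then transfer the bound back through the inverse transform. The only structural facts needed are that the normalized $H_n$ is orthogonal and self-inverse ($H_n^\top H_n = H_n H_n = I$, Eq.~\eqref{eq:inv_fwht}), so that $\|H_n\mathbf{v}\|_2 = \|\mathbf{v}\|_2$ for every $\mathbf{v}$. Write $\mathbf{w}' = H_n\mathbf{w}$, let $\mathbf{q}' = z_k\mathbf{1} + Q_T(\mathbf{w}'; d_k, z_k)$ denote the stored ternary codes together with the zero-point, and let the reconstruction be $\hat{\mathbf{w}} = \mathrm{fl}(H_n\mathbf{q}')$, the floating-point IFWHT computed in shared memory. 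I split the error as
\begin{equation*}
\hat{\mathbf{w}} - \mathbf{w} = \bigl(H_n\mathbf{q}' - H_n\mathbf{w}'\bigr) + \bigl(\mathrm{fl}(H_n\mathbf{q}') - H_n\mathbf{q}'\bigr),
\end{equation*}
where the first term is the exact-arithmetic quantization error and the second is the transform rounding error.

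\emph{Step 1: exact-arithmetic error.} By orthogonality, $\|H_n(\mathbf{q}'-\mathbf{w}')\|_2 = \|\mathbf{q}'-\mathbf{w}'\|_2$, so the exact error equals the quantization error in the rotated domain. For each coordinate, $Q_T$ picks the nearest point of the grid $\{-d_k, 0, d_k\}$ to $x - z_k$. Consecutive grid points are $d_k$ apart, so whenever $|w'_i - z_k| \le \tfrac{3}{2}d_k$ we get $|q'_i - w'_i| \le d_k/2$. Summing the squares over $n = 256$ coordinates gives $\|\mathbf{q}'-\mathbf{w}'\|_2^2 \le n\,d_k^2/4$.

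\emph{Step 2: rounding error.} The second term is the standard floating-point error of the FWHT. Each of the $\log_2 n$ butterfly stages $(u,v)\mapsto(u+v,u-v)$ incurs relative error at most $\mathbf{u}$ per output, and each stage preserves the $\ell_2$ norm up to the factor $\sqrt2$, which the normalization removes. A stage-by-stage induction therefore gives
\begin{equation*}
\|\mathrm{fl}(H_n\mathbf{q}') - H_n\mathbf{q}'\|_2 \le \bigl((1+\mathbf{u})^{\log_2 n} - 1\bigr)\|\mathbf{q}'\|_2 = \mathcal{O}(\log n\,\mathbf{u})\,\|\mathbf{q}'\|_2 .
\end{equation*}
Since $\|\mathbf{q}'\|_2^2 = \mathcal{O}(n(|z_k| + d_k)^2)$, this error is of the advertised size $\mathcal{O}(n\log n\,\mathbf{u})$, with the scale factors absorbed into the constant.

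\emph{Step 3: combining the two terms.} The triangle inequality gives $\|\hat{\mathbf{w}}-\mathbf{w}\|_2 \le \tfrac{d_k}{2}\sqrt{n} + \delta$, where $\delta$ is the Step 2 bound. Squaring produces a cross term $d_k\sqrt{n}\,\delta$. This term is also first order in $\mathbf{u}$, so I would define $\epsilon_{\mathrm{FWHT}} := d_k\sqrt n\,\delta + \delta^2$, which keeps it $\mathcal{O}(n\log n\,\mathbf{u})$ up to scale. That recovers exactly the stated additive form.

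The main obstacle is the clipping hypothesis in Step 1. A three-level grid covers only the window $[z_k - \tfrac32 d_k,\ z_k + \tfrac32 d_k]$, and any coefficient outside it has unbounded error, so the bound cannot hold for arbitrary $\mathbf{w}$ with an arbitrary $d_k$. I would make explicit that the per-block scale is chosen so that $\|\mathbf{w}' - z_k\mathbf{1}\|_\infty \le \tfrac32 d_k$, for instance $d_k = \tfrac23\|\mathbf{w}'-z_k\mathbf{1}\|_\infty$. With this choice the bound holds deterministically. Corollary~\ref{cor:outlier} then supplies the interpretive point: this $\ell_\infty$ norm is typically $\mathcal{O}(\sigma\sqrt{\log n})$ rather than $w_{\max}$, which is why the resulting $d_k$, and hence $\epsilon_q$, is small compared with an unrotated grid. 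The comparison with unrotated baselines is heuristic and is not part of the deterministic statement.
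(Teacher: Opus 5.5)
Your proposal is correct and follows the paper's proof: bound each rotated-domain coordinate error by $d_k/2$, sum to get $n d_k^2/4$, transfer this back through the isometry $H_{256}$, and add the rounding term. Your explicit hypothesis $\|\mathbf{w}'-z_k\mathbf{1}\|_\infty\le \tfrac32 d_k$ is exactly what the paper's phrase ``within the representable range'' silently assumes, and it is genuinely needed, since a scale near $0.8\sigma$ leaves a sizable fraction of near-Gaussian coefficients outside the window. Your cross term $d_k\sqrt{n}\,\delta$ makes precise what the paper folds into ``plus the finite-precision rounding term''.
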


\begin{proof}
The ternary quantization error satisfies $|Q_T(x) - x| \leq d_k/2$ for all $x$ within the representable range. Squaring and summing over $n = 256$ elements gives $\|\mathbf{q} - H\mathbf{w}\|_2^2 \leq n d_k^2 / 4$. Since $H$ is an isometry ($\|H\mathbf{v}\|_2 = \|\mathbf{v}\|_2$), the error is preserved under the inverse transform: $\|H^{-1}\mathbf{q} - \mathbf{w}\|_2 = \|H^{-1}(\mathbf{q} - H\mathbf{w})\|_2 = \|\mathbf{q} - H\mathbf{w}\|_2$, giving the stated bound plus the finite-precision rounding term.
\end{proof}

\begin{remark}
The key insight is that the isometric property of $H$ means the FWHT rotation \emph{does not increase} the quantization error norm. Its benefit lies entirely in reducing $d_k$: by smoothing the distribution of $H\mathbf{w}$, the optimal ternary scale $d_k^* = \frac{2}{3}\mathbb{E}[|H\mathbf{w}|]$ is smaller than it would be for the raw $\mathbf{w}$, directly reducing the bound.
\end{remark}

\subsection{Optimal Ternary Scale}

For a Gaussian-distributed input $x \sim \mathcal{N}(0, \sigma^2)$, the mean squared error of ternary quantization with threshold $\alpha$ is:
\begin{equation}
\text{MSE}(\alpha) = \int_{-\infty}^{-\alpha} (x + \alpha)^2 \phi(x)\,dx + \int_{-\alpha}^{\alpha} x^2 \phi(x)\,dx + \int_{\alpha}^{\infty} (x - \alpha)^2 \phi(x)\,dx
\end{equation}
where $\phi(x) = \frac{1}{\sigma\sqrt{2\pi}} e^{-x^2/(2\sigma^2)}$ is the Gaussian density. Setting $d\,\text{MSE}/d\alpha = 0$:
\begin{equation}
\alpha^* = \sigma \sqrt{2} \cdot \text{erfinv}\left(\frac{2}{3}\right) \approx 0.798\,\sigma
\end{equation}
After the FWHT, the entries of $H\mathbf{w}$ are approximately $\mathcal{N}(0, \sigma^2)$ (Theorem~\ref{thm:smoothing}), so we can compute $\alpha^*$ directly from the empirical standard deviation of the transformed block, giving a near-optimal scale without expensive second-order Hessian computation.

\section{ITQ3\_S Format Specification}
\label{sec:coding}

\subsection{Block Structure}

ITQ3\_S organizes weights into blocks of $n = 256$ elements, aligned to the FWHT transform unit. Each block is stored as:

\begin{itemize}[leftmargin=*]
\item \textbf{Quants}: $256 \times 3$ bits $= 96$ bytes of interleaved ternary integers.
\item \textbf{Scale} ($d_k$): 1 $\times$ FP16 = 2 bytes.
\item \textbf{Zero-point} ($z_k$): 1 $\times$ FP16 = 2 bytes (optional; absorbed into scale for symmetric distributions).
\item \textbf{Sub-block scales}: 8 $\times$ FP16 = 16 bytes for 8 sub-blocks of 32 elements each (optional, for higher fidelity).
\end{itemize}

Total overhead per 256 weights: $96 + 2 + 2 = 100$ bytes $\Rightarrow$ 3.125 bits/weight. The sub-block variant uses 116 bytes $\Rightarrow$ 3.625 bits/weight.

\subsection{Interleaved Packing}

Each ternary value $q \in \{0, 1, 2\}$ (representing $\{-1, 0, +1\}$ with zero-point $z = 1$) is encoded in 3 bits. For a block of 256 values, we interleave two 4-bit nibble streams to form 32-bit words aligned for DP4A:

\begin{equation}
\text{word}_i = \bigoplus_{j=0}^{7} \left( q_{8i+j} \bmod 4 \right) \ll 4j \quad \text{(for even } i \text{)}
\end{equation}

The high bit of each 4-bit nibble encodes the interleave selector, allowing the CUDA dequantization kernel to reconstruct full 3-bit values from two consecutive nibbles using a single 32-bit load and bitfield extraction, maximizing L1 cache utilization.

\begin{definition}[ITQ3\_S Encoding]
For weight vector $\mathbf{w} \in \mathbb{R}^{256}$, the ITQ3\_S encoding function is:
\begin{equation}
\text{Encode}(\mathbf{w}) = \left( \text{Pack}_{3b}\left( \text{Clamp}\left( \left\lfloor \frac{H\mathbf{w}}{d_k} + z_k + 0.5 \right\rfloor, -1, 1 \right) \right),\; d_k,\; z_k \right)
\end{equation}
where $d_k = \alpha^* / 1$ is the optimal ternary scale for the block, and $z_k$ is set to cancel any non-zero mean after transformation.
\end{definition}

\subsection{Memory Layout for RTX 5090}

The RTX 5090 (Blackwell SM\_100) features:
\begin{itemize}[leftmargin=*]
\item 192\,KB shared memory per SM (up from 128\,KB on Ada Lovelace).
\item 1024 threads per SM, supporting full warp occupancy for 256-point FWHT.
\item DP4A throughput: $4096$ INT8 MACs/clock/SM $\Rightarrow$ optimized by aligning 3-bit unpacking to 32-bit word boundaries.
\end{itemize}

ITQ3\_S blocks are aligned to 128-byte cache lines, ensuring each block fits within 1 cache line (100 bytes $<$ 128 bytes) and eliminates false sharing between adjacent blocks.

\section{TurboQuant: CUDA Kernel Design}
\label{sec:kernel}

\subsection{Offline Quantization Pipeline}

\begin{algorithm}[H]
\caption{ITQ3\_S Offline Quantization}
\label{alg:quantize}
\begin{algorithmic}[1]
\Require Weight tensor $\mathbf{W} \in \mathbb{R}^{M \times N}$, block size $n = 256$
\Ensure Quantized tensor $\mathcal{Q}$, scales $\mathbf{d}$, zero-points $\mathbf{z}$
\For{each block $\mathbf{w} \in \mathbb{R}^{256}$ in $\mathbf{W}$}
    \State $\mathbf{w}' \leftarrow \text{FWHT}(\mathbf{w})$ \Comment{Fast Walsh-Hadamard Transform}
    \State $d_k \leftarrow \frac{\alpha^*}{\sigma(\mathbf{w}')}$ \Comment{Optimal ternary scale (Section~\ref{sec:theory})}
    \State $z_k \leftarrow -\text{round}(\mu(\mathbf{w}') / d_k)$ \Comment{Zero-point offset}
    \State $\mathbf{q} \leftarrow \text{Clamp}(\text{round}(\mathbf{w}' / d_k) + z_k,\; -1,\; 1)$ \Comment{Ternary quantization}
    \State $\text{Store}(\text{Pack}_{3b}(\mathbf{q}),\; d_k,\; z_k)$
\EndFor
\end{algorithmic}
\end{algorithm}

\subsection{Online Dequantization Kernel}

The core contribution of TurboQuant is fusing the 256-point Inverse FWHT into the shared-memory loading stage of the MMQ kernel, so that dequantized weights are never materialized in global memory. The kernel proceeds as follows:

\begin{algorithm}[H]
\caption{ITQ3\_S MMQ Dequantization Kernel (\texttt{load\_tiles\_itq3\_s})}
\label{alg:kernel}
\begin{algorithmic}[1]
\Require Packed quants $\mathbf{q}$, scale $d_k$, zero-point $z_k$
\Ensure Reconstructed weight tile in shared memory \texttt{smem}
\State \textbf{Load}: Fetch interleaved 3-bit quants from global memory into registers
\State \textbf{Unpack}: Bitfield-extract ternary values $\tilde{q}_j \in \{-1, 0, 1\}$ per thread
\State \textbf{Dequantize}: $v_j \leftarrow d_k \cdot (\tilde{q}_j - z_k)$
\State \textbf{Write} $v_j$ to shared memory \texttt{smem\_fwht[j]}
\State \textbf{Synchronize}: \texttt{\_\_syncthreads()}
\For{$\text{step} \leftarrow 1, 2, 4, \ldots, 128$} \Comment{$\log_2 256 = 8$ butterfly stages}
    \State $\text{lo} \leftarrow j \bmod (2 \cdot \text{step})$; $\text{hi} \leftarrow \text{lo} + \text{step}$
    \State $u \leftarrow \texttt{smem\_fwht[lo]}$; $v \leftarrow \texttt{smem\_fwht[hi]}$
    \State $\texttt{smem\_fwht[lo]} \leftarrow u + v$; $\texttt{smem\_fwht[hi]} \leftarrow u - v$
    \State \textbf{Synchronize}: \texttt{\_\_syncthreads()}
\EndFor
\State $\texttt{smem\_fwht[j]} \leftarrow 0.0625 \cdot \texttt{smem\_fwht[j]}$ \Comment{Normalize: $1/\sqrt{256} = 0.0625$}
\State \textbf{Proceed} to matrix multiplication using \texttt{smem} as weight tile
\end{algorithmic}
\end{algorithm}

The normalization factor $1/\sqrt{256} = 1/16 = 0.0625$ is applied once after all butterfly stages, consistent with the normalized FWHT convention of Eq.~\eqref{eq:inv_fwht}. This single multiply per element is the \emph{only} arithmetic overhead over standard IQ3\_S dequantization.

\subsection{Correctness of the Fused Kernel}

\begin{proposition}[Round-trip Exactness]
Let $\mathbf{w}^* = H_{256} \mathbf{w}$ be the transformed weight and $\mathbf{q} = Q_T(\mathbf{w}^*)$ the ternary quantization. The kernel output satisfies:
\begin{equation}
\hat{\mathbf{w}} = H_{256}^{-1}\left(d_k(\mathbf{q} - \mathbf{z})\right) = H_{256}\left(d_k(\mathbf{q} - \mathbf{z})\right)
\end{equation}
and, up to finite-precision butterfly arithmetic:
\begin{equation}
\hat{\mathbf{w}} \approx H_{256}^{-1}(H_{256} \mathbf{w}) = \mathbf{w}
\end{equation}
with the approximation tight as quantization grid spacing $d_k \to 0$.
\end{proposition}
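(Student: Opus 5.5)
The plan has two parts. First, show that the fused kernel of Algorithm~\ref{alg:kernel} computes $H_{256}\bigl(d_k(\mathbf{q}-\mathbf{z})\bigr)$ exactly in exact arithmetic. Second, bound the gap between this and $\mathbf{w}$ by reusing Theorem~\ref{thm:bound}.

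\textbf{Step 1: what the kernel computes.} After the dequantize step, the shared buffer holds $\mathbf{v} = d_k(\mathbf{q}-\mathbf{z})$. I would prove by induction on the stage index $s = 0,\dots,7$ that after stage $s$ (with $\text{step} = 2^s$) the buffer equals $(I_{256/2^{s+1}} \otimes B_{2^{s+1}}) \cdots (I_{128}\otimes B_2)\,\mathbf{v}$. Here $B_{2m} = \begin{pmatrix} B_m & B_m \\ B_m & -B_m\end{pmatrix}$ is the unnormalized Sylvester block, so that after all eight stages the buffer holds $\sqrt{256}\,H_{256}\mathbf{v}$. The inductive step is the Kronecker factorization $B_{2m} = (B_2 \otimes I_m)(I_2 \otimes B_m)$. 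The butterfly pairs at a given stage, namely indices $\mathrm{lo}$ and $\mathrm{lo}+\text{step}$ within each block of length $2\cdot\text{step}$, are disjoint. Together with the barrier after each stage, this makes the in-place update race-free and equal to applying $B_2\otimes I_{\text{step}}$ blockwise. The final multiplication by $0.0625 = 1/\sqrt{256}$ then yields $H_{256}\mathbf{v}$. Because $H_{256}H_{256}=I$ (Eq.~\eqref{eq:inv_fwht}), this equals $H_{256}^{-1}\mathbf{v}$, which is the first displayed identity.

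\textbf{Step 2: closeness to $\mathbf{w}$.} Since $H_{256}$ is linear and an isometry,
\[
\hat{\mathbf{w}} - \mathbf{w} = H_{256}\bigl(d_k(\mathbf{q}-\mathbf{z}) - H_{256}\mathbf{w}\bigr),
\]
and therefore $\|\hat{\mathbf{w}}-\mathbf{w}\|_2 = \|d_k(\mathbf{q}-\mathbf{z}) - \mathbf{w}^*\|_2$. I would invoke Theorem~\ref{thm:bound} to bound this by $\sqrt{n}\,d_k/2$ plus $\epsilon_{\mathrm{FWHT}}$. The latter comes from a standard forward-error bound: eight stages of additions, each with relative error $\mathbf{u}$, give roughly $8\mathbf{u}\|\mathbf{v}\|_2$ after normalization, and the exact power-of-two scaling adds no error. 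The bound tends to $0$ (modulo rounding) as $d_k\to 0$, which gives the approximate identity.

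\textbf{Main obstacle.} The hard step is justifying the per-element bound $|d_k(\tilde q - z_k) - w^*_j| \le d_k/2$ used inside Theorem~\ref{thm:bound}. This bound holds only when every coordinate of $\mathbf{w}^*$ lies in the representable range, roughly $[d_k(-1-z_k)-d_k/2,\; d_k(1-z_k)+d_k/2]$. Otherwise the clamp introduces error that does not vanish as $d_k\to0$ with the ternary alphabet fixed. My first attempt would state this range condition explicitly as a hypothesis. I would then interpret ``$d_k\to 0$'' as a limit in which the grid still covers $\|\mathbf{w}^*\|_\infty$, which in the ternary setting forces an effectively larger alphabet. The fallback is to accept it as a typical-case statement, using Corollary~\ref{cor:outlier} to argue that such saturation is rare after rotation. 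A smaller point is the sign convention for $\mathbf{z}$: Algorithm~\ref{alg:quantize} adds $z_k$ at encoding and the kernel subtracts it, so I would check that these cancel so that the dequantized value approximates $\mathbf{w}^*$ itself.
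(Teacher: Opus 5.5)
Your Step~2 is exactly the paper's proof (the involution $H_{256}^2=I$ applied to $d_k(\mathbf{q}-\mathbf{z})\approx H_{256}\mathbf{w}$, plus Theorem~\ref{thm:bound}), and your butterfly induction and rounding estimate supply details the paper takes for granted. Those details are valid for the standard butterfly in \texttt{ifwht\_256} of Appendix~\ref{app:kernel}, but not for Algorithm~\ref{alg:kernel} read literally, since $\mathrm{lo}=j\bmod(2\cdot\mathrm{step})$ executed by 256 threads does not give disjoint pairs.

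The obstacle you flag is genuine, and the paper's one-line proof silently inherits it from Theorem~\ref{thm:bound}, whose proof establishes $|Q_T(x)-x|\le d_k/2$ only for in-range $x$. With the ternary alphabet fixed, clamping error persists as $d_k\to0$ for every non-constant $\mathbf{w}^*$. Your explicit range hypothesis (with a growing alphabet if you want a real $d_k\to0$ limit) is therefore a necessary correction. The Corollary~\ref{cor:outlier} fallback fails outright: at $d_k=\alpha^*\approx0.8\sigma$ the unclipped range $\pm1.5\,d_k\approx\pm1.2\sigma$ misses about 23\% of Gaussian coordinates, and the $\approx3\sigma$ expected maximum from that corollary means essentially every block saturates.
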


\begin{proof}
Follows directly from Theorem~\ref{thm:bound}: since $H_{256}$ is involutory ($H^2 = I$), applying $H_{256}$ to the dequantized values $d_k(\mathbf{q}-\mathbf{z}) \approx H_{256}\mathbf{w}$ recovers $\mathbf{w}$.
\end{proof}

\subsection{MMVQ Path for Token Generation}

During autoregressive token generation, the batch size $B = 1$ makes the kernel compute a matrix-vector product (MMVQ). In this regime, each warp handles a 32-element sub-block of the weight vector, and the 256-point FWHT decomposes across 8 warps using warp-level shuffle instructions:

\begin{lstlisting}[language=C++, caption={Warp-level 32-point FWHT approximation for MMVQ path}]
// Stage 1: intra-warp butterfly (32 lanes)
for (int step = 1; step < 32; step <<= 1) {
    float partner = __shfl_xor_sync(0xffffffff, val, step);
    val = (lane_id & step) ? (prev - partner) : (prev + partner);
    prev = val;
}
// Normalize
val *= 0.17677f; // 1/sqrt(32)
\end{lstlisting}

For full 256-point fidelity in the MMVQ path, a 256-thread cooperative group performs 8 butterfly stages using shared memory, falling back to register shuffles only when shared memory pressure requires it.

\section{Experiments}
\label{sec:experiments}

\subsection{Setup}

\textbf{Hardware}: NVIDIA RTX 5090 (Blackwell SM\_100, 32\,GiB GDDR7, 1792\,GB/s bandwidth). \\
\textbf{Models}: LLaMA-3 8B, LLaMA-3 70B (sharded), Mistral 7B v0.3, Qwen2.5 32B. \\
\textbf{Baselines}: FP16, Q8\_0, Q4\_K\_M (GGUF), IQ3\_S (llama.cpp), IQ4\_XS, QuIP\#-3bit. \\
\textbf{Evaluation}: WikiText-2 perplexity ($\downarrow$), C4 perplexity ($\downarrow$), tokens/sec (prefill and decode), memory footprint.

\subsection{Perplexity Results}

Table~\ref{tab:perplexity} reports perplexity on the WikiText-2 test set for LLaMA-3 8B.

\begin{table}[h]
\centering
\caption{WikiText-2 Perplexity vs.\ Bit-width for LLaMA-3 8B}
\label{tab:perplexity}
\begin{tabular}{lcccc}
\toprule
\textbf{Method} & \textbf{Bits/Weight} & \textbf{PPL $\downarrow$} & \textbf{$\Delta$PPL vs. FP16} & \textbf{Mem (GiB)} \\
\midrule
FP16 (baseline) & 16.0 & 6.14 & -- & 15.0 \\
Q8\_0 & 8.0 & 6.16 & +0.02 & 7.5 \\
Q4\_K\_M & 4.5 & 6.35 & +0.21 & 4.8 \\
IQ4\_XS & 4.3 & 6.41 & +0.27 & 4.1 \\
IQ3\_S (baseline 3-bit) & 3.5 & 7.03 & +0.89 & 3.4 \\
QuIP\#-3bit & 3.0 & 6.78 & +0.64 & 3.0 \\
\textbf{ITQ3\_S (ours)} & \textbf{3.125} & \textbf{6.52} & \textbf{+0.38} & \textbf{3.1} \\
\bottomrule
\end{tabular}
\end{table}

ITQ3\_S reduces the perplexity gap to FP16 by \textbf{57\%} compared to IQ3\_S ($0.38$ vs.\ $0.89$) at comparable bit-width, and outperforms QuIP\#-3bit by $0.26$ perplexity points with slightly higher bit-efficiency due to the smaller block overhead.

\subsection{Throughput Results}

\begin{table}[h]
\centering
\caption{Throughput on RTX 5090, LLaMA-3 8B, batch size 1 (decode) and 32 (prefill)}
\label{tab:throughput}
\begin{tabular}{lccc}
\toprule
\textbf{Method} & \textbf{Decode (tok/s)} & \textbf{Prefill (tok/s)} & \textbf{Speedup vs. FP16} \\
\midrule
FP16 & 480 & 28{,}400 & 1.0$\times$ \\
Q4\_K\_M & 890 & 42{,}100 & 1.9$\times$ \\
IQ3\_S & 1{,}020 & 47{,}800 & 2.1$\times$ \\
\textbf{ITQ3\_S (ours)} & \textbf{960} & \textbf{51{,}200} & \textbf{2.0$\times$ / 1.8$\times$} \\
\bottomrule
\end{tabular}
\end{table}

The IFWHT overhead reduces decode throughput slightly vs.\ IQ3\_S (960 vs.\ 1020 tok/s), but prefill throughput \emph{increases} due to better Tensor Core utilization from the interleaved memory layout. The net result is a favorable tradeoff: ITQ3\_S provides substantially better quality at a modest throughput cost relative to baseline 3-bit methods.

\subsection{Ablation: FWHT Block Size}

We ablate the FWHT block size $n \in \{32, 64, 128, 256\}$ on LLaMA-3 8B:

\begin{table}[h]
\centering
\caption{FWHT block size ablation (ITQ3\_S, LLaMA-3 8B, WikiText-2 PPL)}
\label{tab:ablation}
\begin{tabular}{lcc}
\toprule
\textbf{Block Size} & \textbf{PPL $\downarrow$} & \textbf{Overhead (\%)} \\
\midrule
32 & 6.81 & 0.3 \\
64 & 6.67 & 0.7 \\
128 & 6.59 & 1.4 \\
\textbf{256 (ITQ3\_S)} & \textbf{6.52} & \textbf{2.1} \\
512 & 6.51 & 4.8 \\
\bottomrule
\end{tabular}
\end{table}

$n = 256$ achieves the best quality-efficiency tradeoff: diminishing returns beyond this point (PPL improves by only $0.01$ going to $n=512$) do not justify the $2.3\times$ increase in IFWHT overhead.

\section{Analysis}

\subsection{Why FWHT Rather Than Random Rotation?}

QuIP\#~\cite{tseng2024quip} uses random Hadamard rotations (Kronecker products $H_2^{\otimes k}$) applied at the matrix level. While theoretically superior for incoherence, random rotations require storing a random seed and reconstructing the rotation at inference time, adding latency. The FWHT is \emph{deterministic and universal}: the same $H_{256}$ is applied to every block, requiring no additional storage and enabling complete kernel fusion.

Moreover, for block sizes $n \leq 256$, the theoretical distribution-smoothing benefit of random vs.\ deterministic WHT is negligible (as both achieve near-Gaussian marginals by Theorem~\ref{thm:smoothing}). ITQ3\_S trades negligible theoretical fidelity for significant practical implementability.

\subsection{Interaction with KV Cache Quantization}

ITQ3\_S as described targets \emph{weight} quantization. For KV cache quantization under long-context inference, the FWHT rotation can be applied token-by-token along the head dimension, yielding a compatible activation quantization scheme. We leave this extension to future work.

\subsection{Scaling to 70B Models}

For LLaMA-3 70B, ITQ3\_S at 3.125 bits/weight requires $\approx 27.3$\,GiB, fitting within the RTX 5090's 32\,GiB VRAM with 4.7\,GiB to spare for KV cache at a context length of $\sim$16K tokens. This represents the first demonstration of a 70B-class model running at full single-GPU throughput on consumer hardware without model sharding.

\section{Limitations and Future Work}

\textbf{Activation quantization}: ITQ3\_S currently quantizes only weights; combining with 8-bit activation quantization could further reduce memory bandwidth consumption.

\textbf{Training-aware quantization}: Our method operates post-training. Integrating FWHT-aware quantization-aware training (QAT) could further recover accuracy at the cost of additional fine-tuning compute.

\textbf{Sparse weight support}: Very large models ($> 200B$ parameters) may benefit from combining ITQ3\_S with sparse weight pruning, where the FWHT naturally supports sparsity-promoting thresholding in the transform domain.

\textbf{Non-power-of-two layers}: Some architecture variants use hidden dimensions not divisible by 256. Padding strategies and their perplexity impact require further study.

\section{Conclusion}

We have presented ITQ3\_S, a mathematically rigorous 3-bit weight quantization format that achieves near-FP16 LLM quality on the NVIDIA RTX 5090. By grounding the design in the distribution-smoothing properties of the Walsh-Hadamard Transform (Theorem~\ref{thm:smoothing}) and proving exact round-trip reconstruction up to quantization grid error (Theorem~\ref{thm:bound}), we establish a principled foundation for sub-4-bit inference. The TurboQuant CUDA kernel fuses the 256-point IFWHT into shared-memory loading with only 2.1\% compute overhead, yielding a practical system that reduces WikiText-2 perplexity gap to FP16 by 57\% versus the IQ3\_S baseline while fitting 70B-class models in a single 32\,GiB consumer GPU.

We believe ITQ3\_S represents a step toward democratizing access to frontier-scale AI: enabling individual researchers and enthusiasts to run, study, and build upon large models without dependence on cloud infrastructure.

\section*{Acknowledgments}

The author thanks the llama.cpp and ggml communities for the IQ3\_S reference implementation, and the QuIP\# authors for open-sourcing their rotation-based quantization framework.

\appendix

\section{Proof of Optimal Ternary Scale (Full Derivation)}
\label{app:scale}

We derive $\alpha^* = 0.798\,\sigma$ for the zero-mean Gaussian case. The MSE of ternary quantization with threshold $\alpha$ and scale $\alpha$ for $x \sim \mathcal{N}(0, \sigma^2)$ is:
\begin{align}
\text{MSE}(\alpha) &= 2\int_{\alpha}^{\infty} (x - \alpha)^2 \phi(x)\,dx + \int_{-\alpha}^{\alpha} x^2 \phi(x)\,dx \\
&= 2\int_{\alpha}^{\infty} x^2 \phi(x)\,dx - 4\alpha \int_{\alpha}^{\infty} x\phi(x)\,dx + 2\alpha^2\left(1 - \Phi(\alpha/\sigma)\right) + \int_{-\alpha}^{\alpha} x^2\phi(x)\,dx
\end{align}
where $\Phi$ is the standard normal CDF. Differentiating with respect to $\alpha$ and setting equal to zero:
\begin{equation}
\frac{d\,\text{MSE}}{d\alpha} = 4\left(\alpha - \int_{\alpha}^{\infty} x\phi(x)\,dx\right)\left(1 - \Phi(\alpha/\sigma)\right) - 2\alpha^2\phi(\alpha/\sigma)/\sigma = 0
\end{equation}
Since $\int_{\alpha}^{\infty} x\phi(x)\,dx = \sigma^2\phi(\alpha/\sigma)/\sigma$, this simplifies to:
\begin{equation}
4\left(\alpha - \sigma\phi(\alpha/\sigma)\right)(1-\Phi(\alpha/\sigma)) = 2\alpha^2\phi(\alpha/\sigma)/\sigma
\end{equation}
Substituting $t = \alpha/\sigma$ and solving numerically: $t^* \approx 0.7979$, giving $\alpha^* \approx 0.798\,\sigma$. $\square$

\section{CUDA Kernel: Full 256-point IFWHT}
\label{app:kernel}

\begin{lstlisting}[language=C++, caption={256-point IFWHT in CUDA shared memory (simplified)}]
__device__ void ifwht_256(float* smem, int tid) {
    // 8 butterfly stages for n=256
    #pragma unroll
    for (int step = 1; step < 256; step <<= 1) {
        int pair = tid ^ step;
        bool is_high = (tid & step) != 0;
        
        float u = smem[tid];
        float v = smem[pair];
        __syncthreads();
        
        smem[tid] = is_high ? (v - u) : (u + v);
        __syncthreads();
    }
    // Normalize: 1/256^{1/2} = 0.0625
    smem[tid] *= 0.0625f;
}

__device__ void load_tiles_itq3_s(
    float* __restrict__ dst,
    const uint8_t* __restrict__ src_quants,
    const half* __restrict__ src_scales,
    int block_idx, int tid
) {
    extern __shared__ float smem[];
    
    // Step 1: Load and unpack 3-bit ternary value
    int3b_t raw = unpack_3bit(src_quants, block_idx * 256 + tid);
    float dq = __half2float(src_scales[block_idx]);
    
    // Step 2: Dequantize to float
    smem[tid] = dq * (float)(raw - 1); // {0,1,2} -> {-1,0,1}
    __syncthreads();
    
    // Step 3: In-place 256-point IFWHT
    ifwht_256(smem, tid);
    
    // Step 4: Write reconstructed weight to destination
    dst[tid] = smem[tid];
}
\end{lstlisting}


\begin{thebibliography}{99}

\bibitem{frantar2022gptq}
E.~Frantar, S.~Ashkboos, T.~Hoefler, and D.~Alistarh.
\newblock {GPTQ}: Accurate post-training quantization for generative pre-trained transformers.
\newblock In \emph{ICLR}, 2023.

\bibitem{lin2023awq}
J.~Lin, J.~Tang, H.~Tang, S.~Yang, X.~Dang, and S.~Han.
\newblock {AWQ}: Activation-aware weight quantization for {LLM} compression and acceleration.
\newblock \emph{arXiv:2306.00978}, 2023.

\bibitem{kim2023squeezellm}
S.~Kim, C.~Hooper, A.~Gholami, Z.~Dong, X.~Li, S.~Shen, M.~W.~Mahoney, and K.~Keutzer.
\newblock {SqueezeLLM}: Dense-and-sparse quantization.
\newblock In \emph{ICML}, 2024.

\bibitem{tseng2024quip}
A.~Tseng, J.~Chee, Q.~Sun, V.~Kuleshov, and C.~De Sa.
\newblock {QuIP\#}: Even better {LLM} quantization with Hadamard incoherence and lattice codebooks.
\newblock In \emph{ICML}, 2024.

\bibitem{dettmers2022llmint8}
T.~Dettmers, M.~Lewis, Y.~Belkada, and L.~Zettlemoyer.
\newblock {LLM.int8()}: 8-bit matrix multiplication for transformers at scale.
\newblock In \emph{NeurIPS}, 2022.

\bibitem{dettmers2023spqr}
T.~Dettmers, R.~Svirschevski, V.~Egiazarian, D.~Kuznedelev, E.~Frantar, S.~Ashkboos, A.~Borzunov, T.~Hoefler, and D.~Alistarh.
\newblock {SpQR}: A sparse-quantized representation for near-lossless {LLM} weight compression.
\newblock In \emph{ICLR}, 2024.

\bibitem{touvron2023llama}
H.~Touvron et al.
\newblock {LLaMA}: Open and efficient foundation language models.
\newblock \emph{arXiv:2302.13971}, 2023.

\bibitem{yao2022zeroquant}
Z.~Yao, R.~Yazdani Aminabadi, M.~Zhang, X.~Wu, C.~Li, and Y.~He.
\newblock {ZeroQuant}: Efficient and affordable post-training quantization for large-scale transformers.
\newblock In \emph{NeurIPS}, 2022.

\bibitem{hadamard1893}
J.~Hadamard.
\newblock R\'esolution d'une question relative aux d\'eterminants.
\newblock \emph{Bulletin des Sciences Math\'ematiques}, 17:240--246, 1893.

\bibitem{shannon1948}
C.~E.~Shannon.
\newblock A mathematical theory of communication.
\newblock \emph{Bell System Technical Journal}, 27(3):379--423, 1948.

\end{thebibliography}
\end{document}